\newtheorem{assumption}{Assumption}
\newtheorem{theorem}{Theorem}
\begin{document}
\title{                                                           
SEED: Series Elastic End Effectors in 6D for Visuotactile Tool Use}




%
\author{\authorblockN{H.J. Terry Suh\authorrefmark{1},
Naveen Kuppuswamy\authorrefmark{2},
Tao Pang\authorrefmark{1},
Paul Mitiguy\authorrefmark{2},
Alex Alspach\authorrefmark{2},
Russ Tedrake\authorrefmark{1}\authorrefmark{2}}
\authorblockA{\authorrefmark{1}Massachusetts Institute of Technology,
Cambridge, Massachusetts 02139\\ Email: \{hjsuh, pangtao, russt\}@mit.edu}
\authorblockA{\authorrefmark{2}Toyota Research Institute, Cambridge, Massachusetts 02139\\
Email: \{naveen.kuppuswamy, paul.mitiguy, alex.alspach\}@tri.global}}

\maketitle

\begin{abstract}
We propose the framework of Series Elastic End Effectors in 6D (SEED), which combines a spatially compliant element with visuotactile sensing to grasp and manipulate tools in the wild. Our framework generalizes the benefits of series elasticity to 6-dof, while providing an abstraction of control using visuotactile sensing. We propose an algorithm for relative pose estimation from visuotactile sensing, and a spatial hybrid force-position controller capable of achieving stable force interaction with the environment. We demonstrate the effectiveness of our framework on tools that require regulation of spatial forces. Video link: \href{https://youtu.be/2-YuIfspDrk}{\color{magenta} https://youtu.be/2-YuIfspDrk}
\end{abstract}

\IEEEpeerreviewmaketitle

\section{Introduction}
Many tasks in robot manipulation require handling of general tools in the wild; in the future, we believe that robots will be able to grab any tool and do meaningful control in order to accomplish various tasks and exchange forces with the environment. To manipulate tools skillfully and robustly, we will need end effectors that allow controllable hand-tool interaction in hardware, while having sensing capabilities on this interaction to enable closed-loop feedback.

Parallel-jaw grippers are sufficient for grasping \cite{antipodal}, but quickly meet limitations when it comes to forceful tool use. Even when sensing is given via finger attachments \cite{gelslim}, the hardware often relies on friction to handle the forces that arise in hand-tool interaction, and may lack the ability to resist spatial forces in some of the axes. For example, torques applied perpendicular to the finger surface are hard to resist, but arise in many tool-use scenarios \cite{holladay}. Multi-fingered hands are much more versatile, but existing solutions (e.g. imposing a full-rank grasp matrix on the tool \cite{liandsastry,cutkosky}) also rely on frictional contacts, which can limit the amount of force they can exert.

More fundamentally, the rigidity of most of our hardware requires non-smooth contact forces to be used to resist external forces in tool-use. Such forces can be notoriously hard to control robustly as their behavior changes instantaneously \cite{suh2021bundled}. In the absence of the challenges brought by rigid contacts, custom tool changers that are rigidly attached to the robot have demonstrated impressive capability to achieve finely controlled force interaction with the environment \cite{grinder}. However, such solutions require modifying tools with specialized handles compatible with the tool changer, which limits the robot's ability to use unmodified tools.

To alleviate the difficulties coming from rigidity and the non-smooth behavior that it brings, we ask the following question in this work: can we consider visuotactile hardware as not only a mechanism for sensing, but also as an opportunity to provide \textit{compliance} for control? Indeed, similar ideas have been proposed in Series Elastic Actuators (SEA)s \cite{sea}; by attaching a soft spring in front of the gearbox whose deformation can be measured, SEAs have been successful in achieving smooth and stable force control by turning the problem into that of position control \cite{sea,seathesis,hoganimpedancecontrol}.

\begin{figure}[t]
	\centering\includegraphics[width = 0.5\textwidth]{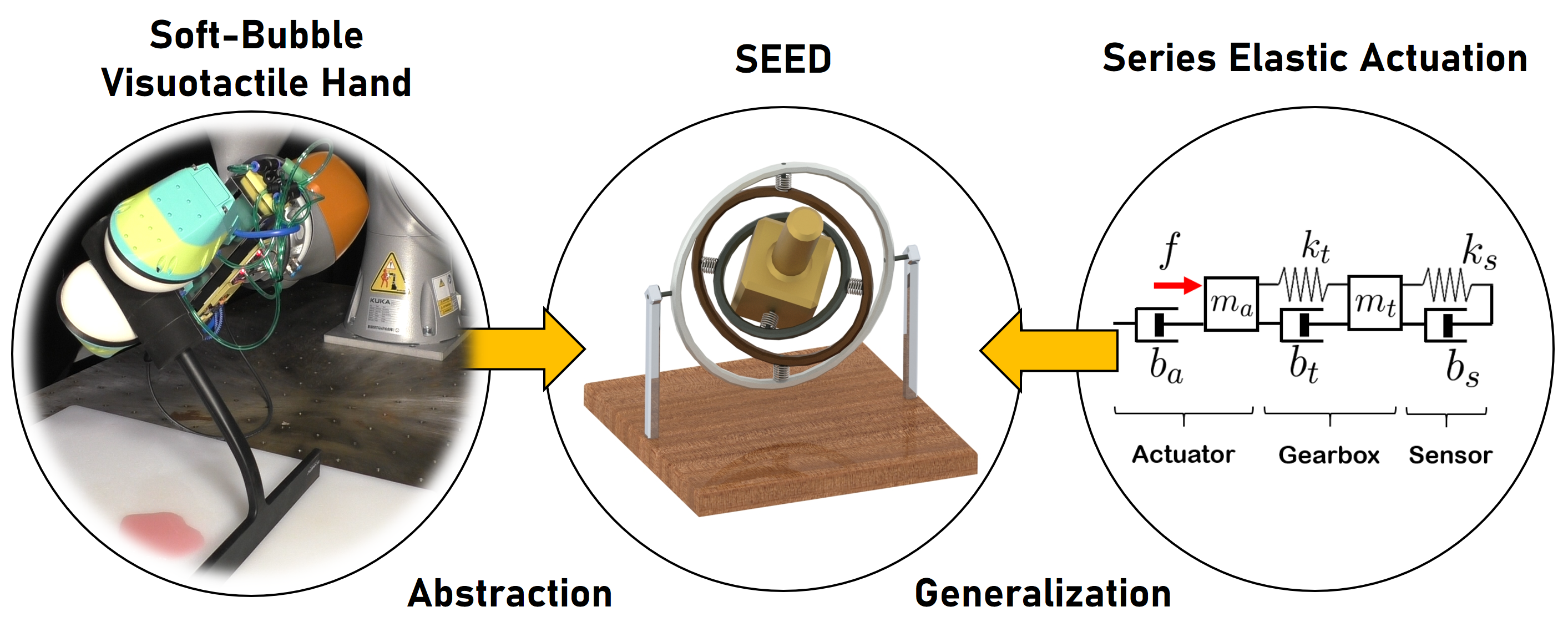}
    \caption{A visual illustration of our framework.}
	\label{fig:banner}
	\vskip -2.0em
\end{figure}

How can we generalize the benefits of SEAs to the setting of grasping and using arbitrary tools? We propose an answer that attaches soft, spatially compliant elements at the end effector right where interaction with the tool occurs. Mechanically, such a solution can be attached to a low-cost, position-controlled robot, while still achieving the benefits of SEAs in the interaction of the end effector and the tool. Through our solution, we aim to achieve a 6D generalization of SEAs that can be useful for spatial tool use.  

Our characterization of spatial series-elastic actuators would not be complete unless we can measure the deformation of the spatial compliance in real time, and use the feedback for force control. In order to achieve this, we leverage recent advances in visuotactile sensing that measure 6D deformation using vision \cite{gelsight,gelslim,bubblegrippers,bigbubble}. In contrast to many works that utilize deep learning to directly process data from visuotactile sensors in an end-to-end manner, we propose to measure the pose of the grasped tool relative to the end effector, abstracting visuotactile sensing as a relative pose estimator.

Our proposed framework of Series Elastic End Effectors in 6D (SEED) consists of three elements: a manipulator capable of accurate position control, a 6D spatially compliant stiffness element, and visuotactile sensing that measures the deformation of the spatial compliance. With these three elements, we show that we can achieve spatial force control of tools with closed-loop feedback from visuotactile sensing. 

\section{Literature Review}
\subsection{Tool-Use in Manipulation}
Tool-use has long been one of the hallmarks of intelligence \cite{animaltooluse}, as well as a practical problem to solve for robotic applications. As such, many existing works \cite{toussainttooluse,holladay} center around how to give robots the ability to use tools. However, only a few works attempt to perform explicit force control with a tool that has not been rigidly attached to the robot, but rather, must be grabbed before it can be used.

Most existing works in this setting focus on planning, where the grabbed tool must be used to manipulate the pose of another object \cite{toussainttooluse,holladay,pushandpull}. Such plans can be very useful in reaching confined spaces \cite{pushandpull} or beyond the workspace of the manipulator \cite{toussainttooluse}. However, as the focus of these works lie more in planning, tasks that require force exchange among static objects, such as using a squeegee, surface grinding, wiping a table, or using torque drivers, are often not considered.

On the other extreme, classical works in robot force control excel in forceful manipulation with rigidly attached tools. Strategies such as impedance control \cite{hoganimpedancecontrol} and hybrid force-velocity control \cite{hybridpositionforce} have been extensively tested and applied on problems that require force exchange between the robot and the environment \cite{grinder,albuschaffer2,forcecontrol}. However, customized tool changers are quite limited in terms of versatility in the wild. 

Finally, works that attempt to explicitly apply forces with the grasped tool \cite{toussaintforce} often run into hardware limitations, as typical parallel jaw grippers with rigid, flat fingers are unable to resist forces and provide compliance in certain directions due to their relatively small contact patch.

\subsection{Manipulation using Visuotactile Sensing}

Visuotactile sensors \cite{gelsight,gelslim,bubblegrippers} consist of a deformable membrane which interacts with objects, and a camera (color, depth or both) under the membrane to measure its deformation during interactions. As the measurements from visuotactile sensors are images, some works have leveraged deep learning approaches to learn the dynamics \cite{swingbot}, or directly learn a map from the input image or optical flow to the policy \cite{visionandtouch,tactilerl}. While such approaches can be effective, we first focus on interpretable abstractions in this work that are more conducive for understanding, and may provide more inductive bias \cite{inductivebias} for designing deep models in the future.

Other works have taken a more model-based route. In \cite{tactiledexterity}, visuotactile sensors are used to track geometric features of the objects such as lines and points. These features are utilized to track the pose of the object and the contact state, which is then used for feedback control. Similarly, \cite{cablemanipulation} tracks the state of a deformable cable by estimating the contact patch ellipse, and fits a linear dynamics model which is stabilized by LQR. Although we use a similar model-based approach, our work is unique in that we generalize the estimator spatially, then explicitly do force control.

\subsection{Tactile Force and Pose Estimation}

Many of the existing works in tactile pose/force estimation attempt to deal with dense measurements. In \cite{contactpatchposeestimation}, ICP is used from dense depth information in order to estimate the poses of the object. Similarly, \cite{tactileposeestimation} uses geometric contact rendering which is then compared with the dense tactile image. While such dense information is useful for classification \cite{gelsight}, it is unclear if such dense information is necessary for control.

On the other hand, \cite{tactiledexterity} estimates simpler features such as points and edges, and \cite{cablemanipulation} estimates ellipsoidal contact patches that are sufficient for achieving the task. We use similar representation to \cite{cablemanipulation}, but estimate the patch in 3D instead of localizing on the plane. While such approaches are efficient to implement and is more relevant to the task, we note that they lack geometric generalizability compared to dense information.

\section{Preliminary: 1D Series Elastic Actuator}
In this section, we briefly review the concept of 1D SEAs, their proposed benefits, and the corresponding control strategies. Although the section will entirely be a review of previous work on SEAs, the ideas presented here will have direct correspondences with our generalization.

\subsection{1D Series Elasticity}
Closed-loop force control often requires a motor, a gearbox, and a force sensor in series. Typically, a relatively stiff sensor based on strain gauges is used. However, this force-feedback setting can result in instability due to high contact stiffness \cite{sea}, as well as non-collocation of sensors and actuators \cite{hoganimpedancecontrol,flexiblevehicles,macromicromanipulator}. This prevents the use of high-gains that are necessary to overcome undesired effects of the gearbox.

SEAs, initially proposed in \cite{sea}, can be understood as a special case where the sensor stiffness is very low. Under this setting, force-feedback enjoys better stability properties at the expense of controller bandwidth, as the spring acts like a mechanical low-pass filter \cite{stableactuator,hoganimpedancecontrol}. For many household tool-use tasks such as wiping with a squeegee, the loss of control bandwidth does not pose a big problem, as such tasks are usually quasistatic.  Thus, one may use high-gain position control to overcome unwanted effects of the gearbox, while still maintaining stability of the system and achieving greater force accuracy \cite{sea}.

\subsection{Force Control of Series Elastic Actuators}
We present a simple version of force control with series elastic actuators. In force control, the user supplies a desired force $f_d$, which can be turned into desired relative position using the sensor stiffness $k$. Then, a high-gain position controller can be used to achieve this relative position. The detailed procedure is described in Algorithm \ref{alg:1dforcecontrol}. In practice, frequency-domain analysis can be done to carefully choose gains that stabilize the closed-loop system \cite{seathesis}.
\begin{algorithm}[thpb]
\textbf{Given}: Desired force $\small ~^W\!f_d$, sensor stiffness $\small k$\;
Convert desired force to desired deformation $\small ~^T\!x^C_d=f_d/k$ \;
Convert desired spring deformation to desired position $\small ~^W\!x^T_d=~^W\!x^T+~^T\!x^C-~^T\!x^C_d$\;
Use position control to regulate to desired position $\small u=-k_p(~^W\! x^T-~^W\!x^T_d)-k_d ~^W\!\dot{x}^T$
 \caption{Force Control with SEAs}
 \label{alg:1dforcecontrol}
\end{algorithm}

\subsection{Multi-DOF SEAs for Tool-Use}\label{sec:2c}
How can we utilize the benefits of SEAs to multiple degrees-of-freedom? One straightforward answer might be to connect SEAs serially at the joint-level \cite{albuschaffer2}. However, achieving accurate end-effector position and force tracking using joint-level SEAs requires fast and accurate joint-level torque sensing, which is not available on many position-controlled robots. Instead, we offer an alternative generalization of SEAs that concentrate the 6D elasticity into the end effector, while allowing the robot to remain stiff. Our generalization involves the following three components:
\begin{enumerate}
    \item A 6D deformable element capable of being stiff in multiple directions simultaneously.
    \item A mechanism to sense the spatial deformation of the above element.
    \item A manipulator capable of controlling spatial pose of the deformable element.
\end{enumerate}

\section{SEED: Series Elastic end effector in 6D}
In this section, we present Series Elastic End Effectors in 6D (SEED), a spatial generalization of 1D SEAs that satisfies the three requirements in Sec.\ref{sec:2c} by using a soft deformable membrane, visuotactile sensing to sense the spatial deformation of the membrane, and a position-controlled manipulator to control the pose of the membrane base.
\subsection{Defining 6D Series Elasticity}
One of the challenges of generalizing the 1D SEA using a spatially compliant element comes from defining an appropriate notion of spatial stiffness \cite{cartesianmatrix}, especially for large rotations (rotations up to 30 degrees are common in our experiments). Rotational stiffness has been traditionally defined on the roll-pitch-yaw and axis-angle parameterization of rotations \cite{spatialimpedanceaxisangle, natale}, which can be made to work for large rotations. 

In this work we have chosen the \textit{bushing model}, which was initially proposed as a coordinate-free parameterization of a bushing element in Drake \cite{drake}. The bushing model also works for large rotations, and can be interpreted more intuitively due to its correspondence to a spring-loaded gimbal (Fig. \ref{fig:banner}). Based on the bushing model, we will develop a generalized stiffness map that relates the relative pose between two frames to a spatial force.


\subsection{Frame Definition} We give the definition of the frames here in order to better ground our notion of 6D series elasticity to the setting of a soft and tactile hand grabbing a tool. At the moment of grasp between the soft hand and the tool, two frames are initialized: $T$, which is rigidly attached to the gripper at a pre-defined nominal location (e.g. the center of the gripper), and $C$, which is rigidly attached to the tool and initialized to be coincident with $T$ (i.e. identity relative transform). 

\subsection{The Generalized Stiffness Map} Given the definition of these frames, our goal is to characterize the relation between the relative pose of $C$ with respect to $T$ (denoted as $\tensor[^T]{\mathbf{X}}{^C}\in\text{SE(3)}$) and the spatial force (written in frame $T$) applied on $C$, which we denote as $\tensor[^T]{F}{}\in\mathbb{R}^6$. We abstractly denote this as a \textit{generalized stiffness map} $\mathcal{K}:\text{SE}(3)\rightarrow \mathbb{R}^6$ such that Eq.\ref{eq:stiffnessmap} holds:
\begin{equation}
    \tensor[^T]{F}{} =\mathcal{K}(\tensor[^T]{\mathbf{X}}{^C}).
    \label{eq:stiffnessmap}
\end{equation}

We expect $\mathcal{K}$ to be a generalized notion of stiffness with smoothness and monotonicity properties under the following assumption of no-slip.

\begin{assumption}
    \normalfont No slip occurs between the contact patch of the gripper and the tool, such that $\mathcal{K}$ smoothly maps relative transform to spatial force.
\end{assumption}

We now concretely describe the bushing model $\mathcal{K}$. We denote ${}^T\Theta^C=[r,p,w]^{\intercal}$ as the roll-pitch-yaw parametrization  (which lives on a $\textit{gimbal}$) of $\tensor[^T]{\mathbf{R}}{^C}\in\text{SO}(3)$, and $\tensor[^T]{x}{^C}\in\mathbb{R}^3$ to be the position component of $\tensor[^T]{\mathbf{X}}{^C}$. Similarly, the spatial force $\tensor[^T]{F}{}$ is divided into torques $\tensor[^T]{\tau}{}\in\mathbb{R}^3$ and forces $\tensor[^T]{f}{}\in\mathbb{R}^3$. Then, the bushing model gives spatial force for a pose using the following relation:

\begin{equation}
    \small \begin{pmatrix} \tensor[^T]{\tau}{} \\ \tensor[^T]{f}{} \end{pmatrix} = \begin{pmatrix} \mathbf{N}^{\intercal}(\tensor[^T]{\Theta}{^C})\mathbf{K}_\tau \tensor[^T]{\Theta}{^C}  \\
    \mathbf{K}_f \tensor[^T]{x}{^C} \end{pmatrix}.
    \label{eq:bushingstiffness}
\end{equation}
where $\mathbf{K}_\tau\in\mathbb{R}^{3\times 3}$ is the \textit{gimbal stiffness matrix}, and $\mathbf{K}_f\in\mathbb{R}^{3\times 3}$ is the standard \textit{translational stiffness matrix}. $\mathbf{N}(\tensor[^T]{x}{^C})\in\mathbb{R}^{3\times 3}$ is the coordinate transformation matrix necessary to convert gimbal torques to spatial torques, and is given by

\begin{equation}
    \small \mathbf{N}(\tensor[^T]{\Theta}{^C})=\begin{pmatrix} \cos w \sec p & \sin w \sec p & 0 \\
                    -\sin w & \cos w & 0 \\
                    \cos w \tan p & \sin w \tan p & 1 \end{pmatrix}.
\end{equation}
The necessity of the matrix becomes apparent by visualizing $\mathbf{K}_\tau \tensor[^T]{\Theta}{^C}$ as torques that are being exerted on each axis of the gimbal, while $\tensor[^T]{\tau}{}$ is defined spatially in $\mathbb{R}^3$. We obtain the matrix by equating the power on a spatial representation $\tensor[^T]{\tau}{}\cdot \tensor[^T]{\omega}{^C}$ to power on the gimbal representation $\mathbf{K}_\tau \tensor[^T]{\Theta}{^C}\cdot \tensor[^T]{\dot{\Theta}}{^C}$, and using the standard conversion between angular velocities and gimbal rates. Throughout our work, we make the following assumption on the structure of $\mathbf{K}_\tau$ and $\mathbf{K}_f$.

\begin{assumption}
    \normalfont The gimbal stiffness matrix $\mathbf{K}_\tau$ and the translational stiffness matrix $\mathbf{K}_f$ are positive definite diagonal matrices. 
\end{assumption}

Under Assumption 2, we present the following theorem which gives a more rigorous notion of smoothness mentioned in Assumption 1. 

\begin{theorem}
\normalfont The bushing model stiffness map $\mathcal{K}:\text{SE}(3)\rightarrow\mathbb{R}^6$ is a diffeomorphism under Assumption 2 everywhere for $|p|\leq\pi/2$.
\label{thm:diffeomorphism}
\end{theorem}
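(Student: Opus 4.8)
The plan is to exploit the block structure of $\mathcal{K}$ in Eq.~\ref{eq:bushingstiffness}, which completely decouples rotation from translation. Identifying $\mathrm{SE}(3)$ with $\mathrm{SO}(3)\times\mathbb{R}^3$ as a smooth manifold and writing a configuration as the pair $(\tensor[^T]{\mathbf{R}}{^C},\tensor[^T]{x}{^C})$, the map factors as a product $\mathcal{K}=\mathcal{K}_\tau\times\mathcal{K}_f$, where $\mathcal{K}_f(\tensor[^T]{x}{^C})=\mathbf{K}_f\,\tensor[^T]{x}{^C}$ depends only on the translational factor and the torque map $\mathcal{K}_\tau(\tensor[^T]{\mathbf{R}}{^C})=\mathbf{N}^{\intercal}(\tensor[^T]{\Theta}{^C})\mathbf{K}_\tau\,\tensor[^T]{\Theta}{^C}$ depends only on the rotational factor. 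Since the product of two diffeomorphisms is a diffeomorphism, it suffices to treat the two pieces separately. The translational piece is immediate: by Assumption~2, $\mathbf{K}_f$ is positive definite and hence invertible, so $\mathcal{K}_f$ is a linear isomorphism of $\mathbb{R}^3$, and in particular a diffeomorphism.

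For the rotational piece I would pass to the roll--pitch--yaw chart. On the open region $|p|<\pi/2$, with $r$ and $w$ confined to a fundamental interval, the parametrization $\tensor[^T]{\Theta}{^C}=[r,p,w]^{\intercal}\mapsto\tensor[^T]{\mathbf{R}}{^C}$ is the standard RPY diffeomorphism onto its image in $\mathrm{SO}(3)$, so it is enough to show that $g(\tensor[^T]{\Theta}{^C}):=\mathbf{N}^{\intercal}(\tensor[^T]{\Theta}{^C})\mathbf{K}_\tau\,\tensor[^T]{\Theta}{^C}$ is a diffeomorphism onto its image. Writing $\mathbf{K}_\tau=\mathrm{diag}(k_r,k_p,k_w)$ with $k_r,k_p,k_w>0$ and expanding the matrix product, the key observation is that $g$ triangularizes: its third component is simply $k_w w$, while its first two components, written $\tau_1,\tau_2$, form a planar rotation by the angle $w$ of the vector $(A,B)^{\intercal}$ with $A=k_r r\sec p+k_w w\tan p$ and $B=k_p p$:
\begin{equation*}
    \begin{pmatrix}\tau_1 \\ \tau_2\end{pmatrix}=\begin{pmatrix}\cos w & -\sin w \\ \sin w & \cos w\end{pmatrix}\begin{pmatrix}A \\ B\end{pmatrix}.
\end{equation*}

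This structure hands us an explicit smooth inverse, which I regard as the cleanest route. Given a torque $(\tau_1,\tau_2,\tau_3)$ I would recover $w=\tau_3/k_w$, invert the planar rotation to obtain $A$ and $B$, read off $p=B/k_p$, and finally solve for $r=(A\cos p-k_w w\sin p)/k_r$. Every operation is a composition of smooth functions together with divisions by the strictly positive constants $k_r,k_p,k_w$ and by $\cos p>0$ (finite precisely because $|p|<\pi/2$); the same condition keeps $\sec p$ and $\tan p$ in $g$ itself finite and smooth. Hence $g$ is a smooth bijection onto its image with smooth inverse, so $\mathcal{K}_\tau$ is a diffeomorphism, and together with the translational piece this proves the theorem on the chart $|p|<\pi/2$. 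Note that the existence of a smooth inverse automatically forces the Jacobian of $g$ to be nonsingular everywhere, so no separate determinant computation is required.

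The main obstacle is not the calculation but the domain, and I would state it precisely. Because $\mathrm{SO}(3)$ is compact while $\mathbb{R}^6$ is not, $\mathcal{K}$ cannot be a global diffeomorphism from all of $\mathrm{SE}(3)$ onto $\mathbb{R}^6$; the correct reading is that $\mathcal{K}$ is a diffeomorphism onto its image over the RPY chart, and the stated condition $|p|\le\pi/2$ must be taken as the open condition $|p|<\pi/2$, since $\mathbf{N}$ diverges at $p=\pm\pi/2$ (gimbal lock). Two points then require attention: $r$ and $w$ must be restricted to a fundamental interval so that the RPY parametrization is genuinely injective, and the recovered values $w=\tau_3/k_w$ and $p=B/k_p$ constrain the image of $g$ accordingly. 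With these caveats in place, the explicit inverse settles injectivity, surjectivity onto the image, and smoothness of the inverse in one stroke.
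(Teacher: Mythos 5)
Your proof is correct and rests on the same two pillars as the paper's: the decoupling of $\mathcal{K}$ into independent translational and rotational factors, and the same explicit closed-form inverse for the rotational part (your $w=\tau_3/k_w$, $p=(\tau_2\cos w-\tau_1\sin w)/k_p$, $r=(A\cos p - k_w w\sin p)/k_r$ is exactly Eq.~\ref{eq:stiffnessinverse}). Where you diverge is in the logical scaffolding. The paper first invokes the Inverse Function Theorem, computing $\det\mathbf{J}=k_rk_pk_y/\cos p$, and then separately establishes bijectivity via the explicit inverse; you drop the determinant computation entirely and let the explicit smooth inverse carry the whole argument, correctly observing that a smooth bijection with smooth inverse is a diffeomorphism by definition, so nonsingularity of the Jacobian comes for free. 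That is the cleaner route, and your triangularization of the torque map (third component $k_w w$; first two components a planar rotation by $w$ of the vector $(k_r r\sec p + k_w w\tan p,\; k_p p)$) actually \emph{derives} the inverse that the paper only states. Your domain caveats are also well taken and genuinely sharpen the claim: the map degenerates at $|p|=\pi/2$, and the paper's own proof concedes the determinant is only well defined for $|p|<\pi/2$ even though the theorem asserts $|p|\le\pi/2$; the conclusion must be read as a diffeomorphism onto the image of the RPY chart rather than onto all of $\mathbb{R}^6$, since compactness of $\mathrm{SO}(3)$ forbids the latter; and $r,w$ must be confined to a fundamental interval for the chart itself to be injective. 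None of these points invalidates the intended result, but the paper leaves all three implicit, so your version is both correct and more carefully stated.
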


\begin{proof}
 Since there is no coupling between the orientation and translational maps, it suffices to separately show that each are diffeomorphisms. The translational map given by $^Tf = \mathbf{K}_f \tensor[^T]{x}{^C}$ is trivially a diffeomorphism under Assumption 2. We use the Inverse Function Theorem to prove the inverse differentiability of the orientation map. The determinant of the Jacobian for the orientation map is given by 
 \begin{equation}
     \small \det \mathbf{J}=\frac{k_rk_pk_y}{\cos p},
 \end{equation}
 where $k_r,k_p,k_y$ are the diagonal elements of $\mathbf{K}_\tau$, which is well defined everywhere in $|p|< \pi/2$. We complete the proof by noting that the orientation map is bijective, and show it by providing a well-defined nonlinear inverse $\tensor[^T]{\Theta}{^C}=\mathcal{K}^{-1}(^T\tau)$:
    \begin{equation}
        \small
        \begin{pmatrix} r \\ p \\ y \end{pmatrix} = 
        \begin{pmatrix}
            \frac{1}{k_r}\big((\tau_x\cos y + \tau_y\sin y)\big)\cos p - \tau_z \sin p \\ 
            \frac{1}{k_p}\big(\tau_y \cos y - \tau_x \sin y\big) \\
            \frac{1}{k_y} \tau_z \\
        \end{pmatrix}.
        \label{eq:stiffnessinverse}
    \end{equation}
 Note that the equation is written in semi-implicit form to save space: one can easily make it explicit by substituting values starting from the bottom row. 
\end{proof}

Theorem \ref{thm:diffeomorphism} tells us that our model of $\mathcal{K}$ follows desirable properties that can smoothly map back and forth between relative pose deformation and spatial force, which we can effectively use in order to do force and impedance control in a manner akin to SEAs. We also note that in hardware, we expect $|p|$ to be confined to $\pi/3$ at most before Assumption 1 is broken and slip occurs. 

\section{Force control with SEED}

\begin{figure*}[thpb]
	\centering\includegraphics[width = 1.0\textwidth]{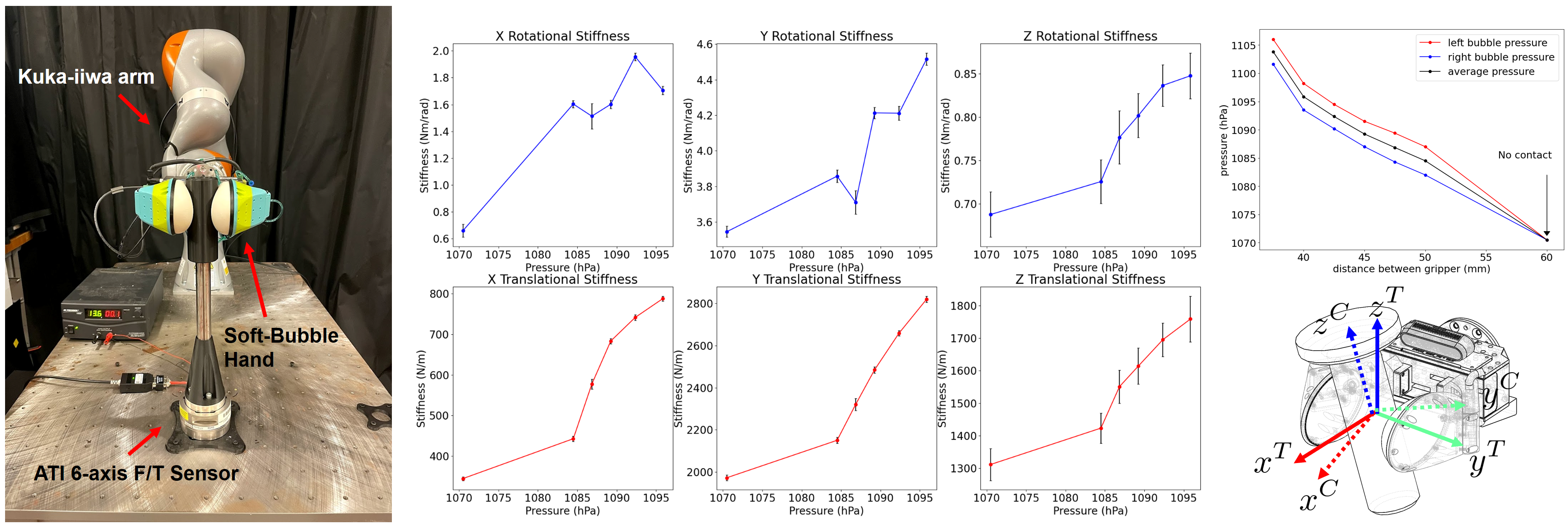}
    \caption{Left: System Identification Setup with the Soft-Bubble gripper \cite{bubblegrippers} and a 6-axis force/torque sensor. Center: Results of identified stiffness values for different axis, per change in pressure. Right Top: Change in pressure as a function of gripper distance command. Right Bottom: Frame definition.}
	\label{fig:sysid}
\end{figure*}

Now we present our main algorithms for doing control with SEED, which follows the general philosophy of controllers using SEAs: \textit{a force control problem is turned to a position control problem} \cite{sea}. Thus, we assume access to a manipulator that can achieve reliable position commands with high gains and rates (akin to how SEAs can use high gains to overcome gearbox effects quickly and achieve accurate positions), which describes most position controlled manipulators with high mechanical repeatability.

\subsection{Problem Setup - Feedback and Action} To setup the control problem, we note that the position controlled manipulator can command end-effector pose $u(t)=\tensor[^W]{\mathbf{X}}{^T}(t)\in \text{SE}(3)$ with high rates using direct inverse kinematics or integration of differential inverse kinematics. Our feedback signal will come from the estimation of relative pose  $y(t)=\tensor[^T]{\mathbf{X}}{^C}(t)$, which is measured by the visuotactile method given in Sec.\ref{sec:perception}. Then, the goal is to find a policy $u(t)=\pi(y(t))$ that achieves some desired specification of the user. 

Throughout the section, we will assume we have some estimate of parameters $\hat{\mathbf{K}}_\tau$ and $\hat{\mathbf{K}}_f$ that define the generalized stiffness map, and denote the estimated map as $\hat{\mathcal{K}}(\hat{\mathbf{K}}_\tau,\hat{\mathbf{K}}_f)$.

\subsection{6D Force Control} 
In force control, the user specifies some desired spatial force $\tensor[^W]{F}{_d}$, described in the world frame. SEED achieves this specified spatial force by converting it to some desired relative transform with the estimated generalized stiffness map $\hat{\mathcal{K}}$. Then, a position command is sent to the manipulator to achieve this relative pose. We describe the detailed process in Algorithm \ref{alg:forcecontrol}. 

\begin{algorithm}[thpb]
\textbf{Given}: Desired wrench $\tensor[^W]{F}{_d}$\;
\textbf{Given}: Estimated SEED stiffness $\hat{\mathcal{K}}(\hat{\mathbf{K}}_\tau,\hat{\mathbf{K}}_f)$\;
\While{$t < T$}{
  Convert $\tensor[^W]{F}{_d}$ to $\tensor[^T]{F}{_d}(t)$ using adjoint transform with current position $\tensor[^W]{\mathbf{X}}{^T}(t)$ \;
  Using the estimated SEED stiffness $\hat{\mathcal{K}}$, convert desired wrench into desired relative pose by $\tensor[^T]{\mathbf{X}}{^C_d}(t)=\hat{\mathcal{K}}^{-1}(\tensor[^T]{F}{_d}(t))$\; 
  Convert desired relative pose into desired end effector pose by $\tensor[^W]{\mathbf{X}}{^T_d}(t)=\tensor[^W]{\mathbf{X}}{^T}(t)\cdot \tensor[^T]{\mathbf{X}}{^C}(t) \cdot \tensor[^C]{\mathbf{X}}{^T_d}(t)$\;
  Send position command $\tensor[^W]{\mathbf{X}}{^T_d}(t)$ to the position controller.
 }
 \caption{Force Control with SEED}
 \label{alg:forcecontrol}
\end{algorithm}

The expression for the orientation part of $\mathcal{K}^{-1}$ has been given in Eq.\ref{eq:stiffnessinverse}, while inverting the position simply requires $\mathbf{K}^{-1}_f$. We note that, like most force control strategies, the controller will not be well-behaved if there is no contact with an external environment. In particular, while position-only force control can move until contact and maintain some desired force, the orientation torque controller must keep rotating until contact, which likely runs into workspace limitations of the manipulator quickly and makes the controller impractical to use in free-space. 

\subsection{6D Hybrid Force/Pose Control}
In many tasks involving tools, the goal is to simultaneously control force and torque in certain directions, while controlling position and orientation in other directions. We naturally extend spatial force control with SEED to this setting by defining a partial inverse of the impedance map $\mathcal{K}$ that attempts to construct the spatial deformation $\tensor[^T]{\mathbf{X}}{^C_d}$ from a subset of specified forces. 

\subsubsection{Hybrid Force/Position Control}

Given a task-relevant decomposition matrix $\mathbf{P}$ which selects a subspace for desired position $x_d\in\mathbb{R}^3$ and desired force $f_d\in\mathbb{R}^3$, we can compute the position $x\in\mathbb{R}^3$ that achieves the specified positions and forces:
\begin{equation}
    \small x = \mathbf{P}x_d + \mathbf{P}^\perp\mathbf{K}_f^{-1} f_d,
\end{equation}
where $\mathbf{P}^\perp$ is the matrix that represents the orthogonal complement of $\mathbf{P}$.  

\subsubsection{Hybrid Torque/Orientation Control}
Unlike force / position control, coordinate transform in rotational space does not happen in a linear manner. Thus, defining hybrid torque/orientation control for an arbitrary task-relevant coordinate representation is significantly more difficult. To deal with this problem, we make the following assumption:

\begin{assumption}
    \normalfont The decomposition of specified orientation and torques happen in the frame of $T$. 
\end{assumption}

Such an assumption is not too restrictive under a large class of tools, as most tools require decomposition of torques and angles in a manner consistent with its natural task-relevant coordinate frame. Under such assumption, we can define partial maps from a subset of desired torques to the full orientation as follows:
\begin{enumerate}
    \item \textbf{2 torques, 1 angle}: The following angles $(r,p,y)$ achieve the given two desired torques $\tau_x^d,\tau_y^d$ and one desired angle $y_d$, given the stiffness map $\mathcal{K}$:
    \begin{equation}
    \small
    \begin{pmatrix} r \\ p \\ y \end{pmatrix} = \begin{pmatrix}
    \frac{1}{k_r}((\tau_x^d \cos y_d + \tau_y^d\sin y_d)\cos p - k_y y_d \sin p) \\
    \frac{1}{k_p}(\tau_y^d \cos y_d - \tau_x^d \sin y_d) \\
    y_d
    \end{pmatrix}.
\end{equation}

    \item \textbf{1 torque, 2 angles}: The following angles $(r,p,y)$ achieves the given torque $\tau_z^d$ and the two desired angles $r_d,p_d$, given the stiffness map $\mathcal{K}$:
    \begin{equation}
    \small
    r = r_d \qquad p = p_d \qquad y = \tau_z / k_y. 
    \end{equation}
\end{enumerate}

After recovering the full pose from a subset of desired forces and torques, we use position control to command this pose, as done in Alg. \ref{alg:forcecontrol}.

\section{System Identification}
In order to apply our framework, we need to do identification on the parameters of the stiffness map $(\mathbf{K}_\tau,\mathbf{K}_f)$, which is consisted of $6$ stiffness parameters. The stiffness parameters can be identified by measuring the static sensitivity of wrench with respect to pose. We achieve this by having a dexterous manipulator grab a 6-axis force/torque sensor and perturbing the pose to observe responses in wrench.

In addition, to see if squeezing or pressurizing the gripper changes the stiffness parameters of the hand, we use the pressure sensor on board the soft-bubble hand \cite{bubblegrippers} to characterize how the gripper distance affects the pressure, and in turn, how the pressure affects the identified stiffness values. The results of our experiments are presented in Fig.\ref{fig:sysid}. 

Along with the quantitative values of stiffness, we summarize our findings from the identification process:

\begin{enumerate}\itemsep -0.1em
    \item The dependence on internal pressure of the hand with respect to the gripper distance is linear.
    \item For $x$ direction torque and all the forces, higher pressure near-linearly corresponds to higher stiffness. The identified stiffness values also have low standard deviation.
    \item For $y,z$ direction torque, the measurement is relatively unreliable and the identified stiffness values are subject to large standard deviations. In addition, higher pressure does not seem to lead to higher stiffness values along these directions. 
\end{enumerate}

The results of system identification, combined with the monotinicity of the stiffness map, leads to a very natural interpretation: if stiffer behavior is desired while controlling the tool, the hardware gives us the means to control the stiffness by grabbing the tool more firmly or by more pressurization.

\section{Tactile Relative Pose Estimation}
\label{sec:perception}

In principle, our framework of control can work well with any tactile end effector that is compliant enough, and an estimation algorithm that produces a well-behaved estimate of the relative pose $\tensor[^T]{\mathbf{X}}{^C}$. In our work, we show an example of such a relative pose estimator by utilizing the PicoFlexx IR-Depth camera mounted within the bubble grippers \cite{bubblegrippers}. 

\subsection{Contact Patch Estimation}

We estimate the position of the contact patch using a simple background subtraction algorithm. Denote $\mathbf{D}_k\in\mathbb{R}^{H\times W}$ as depth image at time $k$. Then, we simply compare $\mathbf{D}_k$ to the initial depth image $\mathbf{D}_0$, taken when the bubble is not in contact. After performing a thresholding operation to obtain the difference, we perform a morphological transformation using an elliptical kernel to obtain a binary mask $\mathbf{M}_k\in\{0,1\}^{H\times W}$. Finally, we use the calibration matrix $\mathbf{K}\in\mathbb{R}^{3\times 3}$ to transform the masked depth image $\mathbf{M}_k \odot \mathbf{D}_k$ into a set of points $\{{}^L p_i^{P_L}\}$, where $\odot$ denotes element-wise multiplication, $L$ denotes the left camera frame, and $P_L$ denotes the left contact patch. Finally, we take a mean ${}^L p^{P_L} = \frac{1}{N}\sum^N_{i=1} {}^L p_i^{P_L}\in\mathbb{R}^3$ to obtain the 3D coordinate of the contact patch, and repeat this process for the right camera.

\subsection{Frame Estimation from Contact Patches}
Given the location of the contact patch on the left bubble expressed in the gripper frame ${}^G p^{P_L}$ and ${}^G p^{P_R}$, we average the positions of the two patches to obtain the position of the contact frame:
\begin{equation}
    {}^G p^C = ({}^Gp^{P_L} + {}^G p^{P_R})/2.
\end{equation}

To compute the rotation $^{G}\mathbf{R}^C$, we introduce an intermediate frame $C'$ such that the $y$ axis of $C'$ is aligned with ${}^G p^{P_R}-{}^G p^{P_L}$, and the $z$ component of the $x$ axis is zero. Denote $u,v,w$ as columns of ${}^G\mathbf{R}^{C'}$ (i.e. ${}^G\mathbf{R}^{C'}=[u|v|w]$), which individually represent the components of the unit vector that define ${}^G\mathbf{R}^{C'}$.  Then, we compute the columns using the following process:
\begin{enumerate}\itemsep -0.1em
    \item Set $v$ to be the normalization of ${}^G p^{P_R}-{}^G p^{P_L}$.
    \item Set $u_z=0$ to define the zero-pitch frame, and $u_x=1$
    \item Set Compute $u_y$ by using $u\cdot v=0$, and normalize $u$. 
    \item Set $w=u\times v$
\end{enumerate}

\begin{figure}[b]
	\centering\includegraphics[width = 0.5\textwidth]{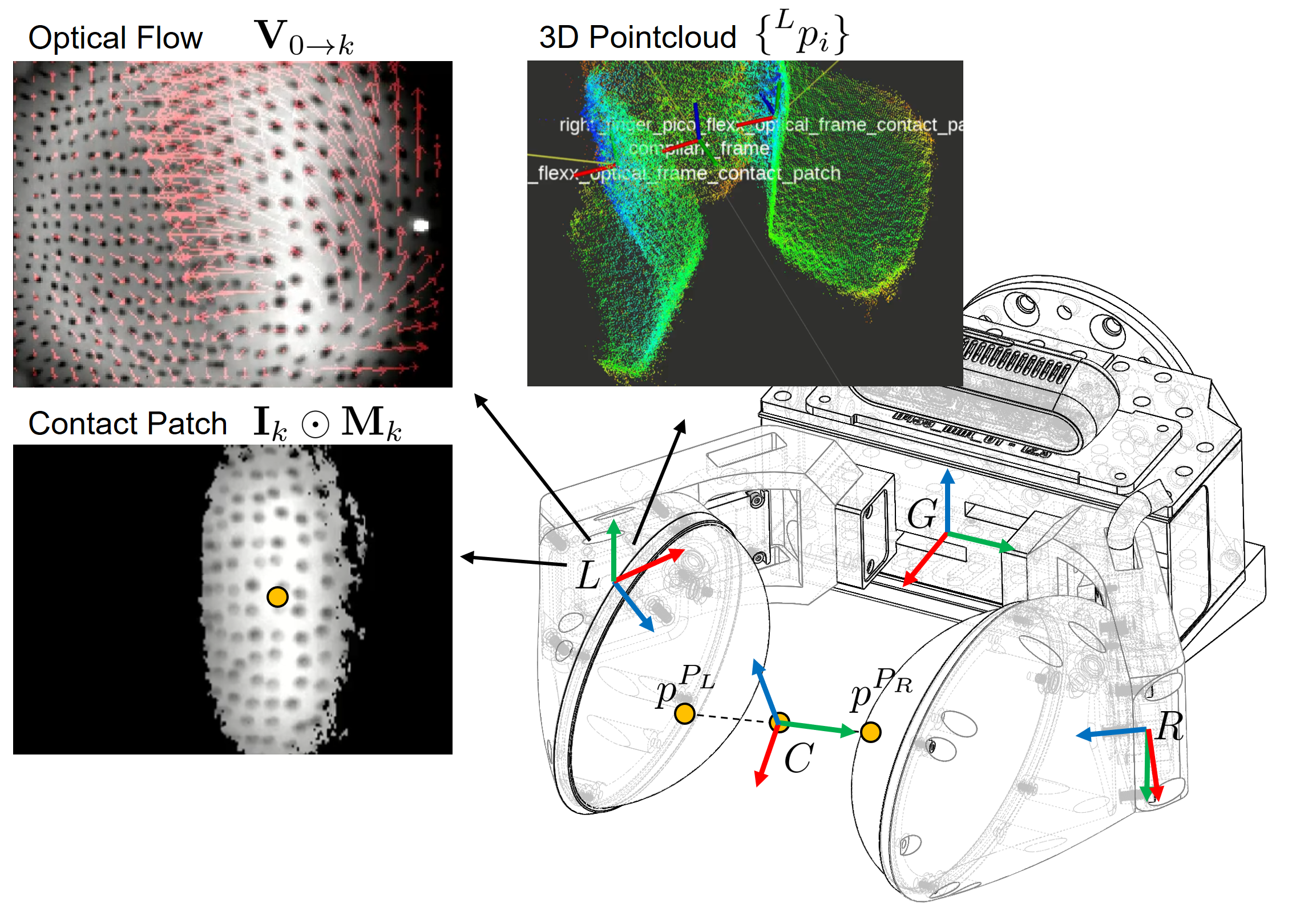}
    \caption{Frame Definition and example images for Relative Pose Estimation.}
	\label{fig:rpe}
\end{figure}

\subsection{Pitch Estimation with Optical Flow}

After computing ${}^{C'}\mathbf{R}^C$, we compute ${}^{C'}\mathbf{R}^{C}$, by computing the rotation along the $y$ axis of $C'$. We estimate this quantity by computing the optical flow of the IR image. We denote as $\mathbf{V}_{0\rightarrow k}$ as the Eulerian flow of $\mathbf{I}_k$ relative to $\mathbf{I}_0$. Then, we compute the curl of $\mathbf{V}_{0\rightarrow k}$:

\begin{equation}
   \small \theta = k\nabla_\times \mathbf{V}_{0\rightarrow k} = k\bigg(\frac{\partial}{\partial x} \mathbf{V}_{0\rightarrow k}^y -  \frac{\partial}{\partial y} \mathbf{V}_{0\rightarrow k}^x\bigg),
\end{equation}
where the superscript denotes the component of the vector field, and $k$ is some normalization constant we calibrate for. The gradients are computed using a Sobel filter with corresponding kernels.

\subsection{Validation Results}

In order to validate the performance of the proposed relative pose estimator, we use the same setup that was used for system identification (Fig.\ref{fig:sysid}). Through the forward kinematics of the manipulator, and the fact that ${}^W \mathbf{X}^C$ is a fixed transform for the system identification setup, we compare the measured values of ${}^G\mathbf{X}^C$ with the results of the relative pose estimator ${}^G\hat{\mathbf{X}}^C$. Our results, illustrated in Fig.\ref{fig:rpetracking}, show that tracking performance of relative pose is dependent on which axis is being tracked:

\begin{figure}[b]
	\centering\includegraphics[width = 0.5\textwidth]{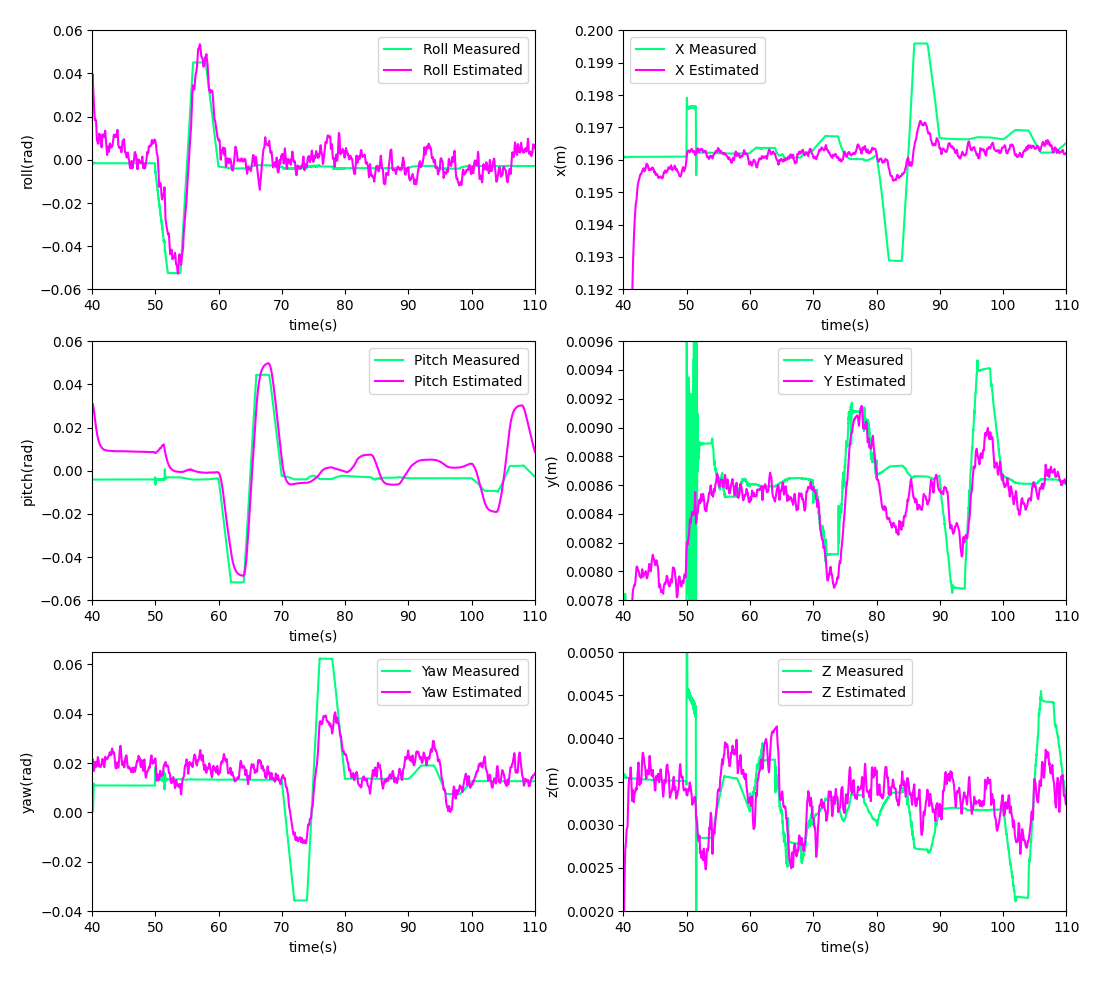}
    \caption{Tracking performance of the Relative Pose Estimation}
	\label{fig:rpetracking}
\end{figure}

\begin{enumerate}
    \item The $y$ position, which uses depth information from each camera, can be tracked reliably.
    \item On the other hand, the $x$ and $z$ position tracking is not very reliable due to the large contact patch caused by the cylindrical geometry of the tool. 
    \item The locations of contact patches on both sides give a very good estimate for roll angle. Optical flow is also successful in tracking pitch. 
    \item While yaw shows reasonable behavior, the estimate tends to underestimate the true yaw angle as the contact patch lags behind true rotation due to the softness of the membrane (i.e. perfect roll does not occur).
\end{enumerate}

\section{Experiment Methods \& Results}
\subsection{Simulation Methods \& Results}
To verify the performance of our proposed pipeline, we first set up a simulation in Drake \cite{drake}, where the compliance between the tool frame $T$ and the compliance frame $C$ is simulated using Drake's 6D compliance element \texttt{LinearBushingRollPitchYaw}. By assuming a perfect measurement of the relative pose ${}^T \mathbf{X}^C$, we aim to decouple the validity of the proposed controller with the accuracy of the tactile relative pose estimator. 
\subsubsection{The Squeegee Task}

The squeegee is a tool that requires regulation of spatial forces along some principle axis, while requiring regulation of position along others. We illustrate the frame definition in Fig.\ref{fig:squeegee}, and decompose the spatial forces and positions in the following directions in order to set a task specification of the hybrid force position controller:
\begin{enumerate}
    \item ${}^W x^C, {}^W y^C, {}^T \psi^C$ are used for position control in order to specify the trajectory of the tool from a table-top view.
    \item ${}^T \tau_y$ and ${}^W f_z$ are used to enforce the magnitude of pressure between the blade and the table.
    \item ${}^T \tau_x$ is used to enforce equal pressure distribution.
\end{enumerate}

As a baseline, we include an open-loop trajectory that is tuned such that the squeegee barely contacts the table, within the mechanical repeatability of the manipulator (0.1mm). In addition, we modify the controller for the case where the tool is rigidly fixed (\textit{welded}) to the end effector in order to simulate the performance of a custom tool changer. The resulting contact forces are inspected based on how much force is exerted ($F_z = \sum_i \lambda^i$), and how much the pressure distribution on the blade is balanced ($\tau_x = \sum_i r_i \times \lambda^i$). The resulting trajectory is shown in Fig.\ref{fig:contact_results}.

We show that compared to the case where the end effector is rigidly attached, the compliant hardware allows much better tracking of $\tau_x$, such that equal pressure is applied on both sides of the squeegee. We mainly attribute this behavior to the built-in compliance, as $\tau_x$ behaves well even in the open-loop setup. By commanding the desired force in closed-loop however, the 6D hybrid force-position controller adds the ability to exert desired amount of forces. Finally, we note that there exists offset in the tracking error due to the unaccounted weight of the tool.

\begin{figure}[thpb]
	\centering\includegraphics[width = 0.5\textwidth]{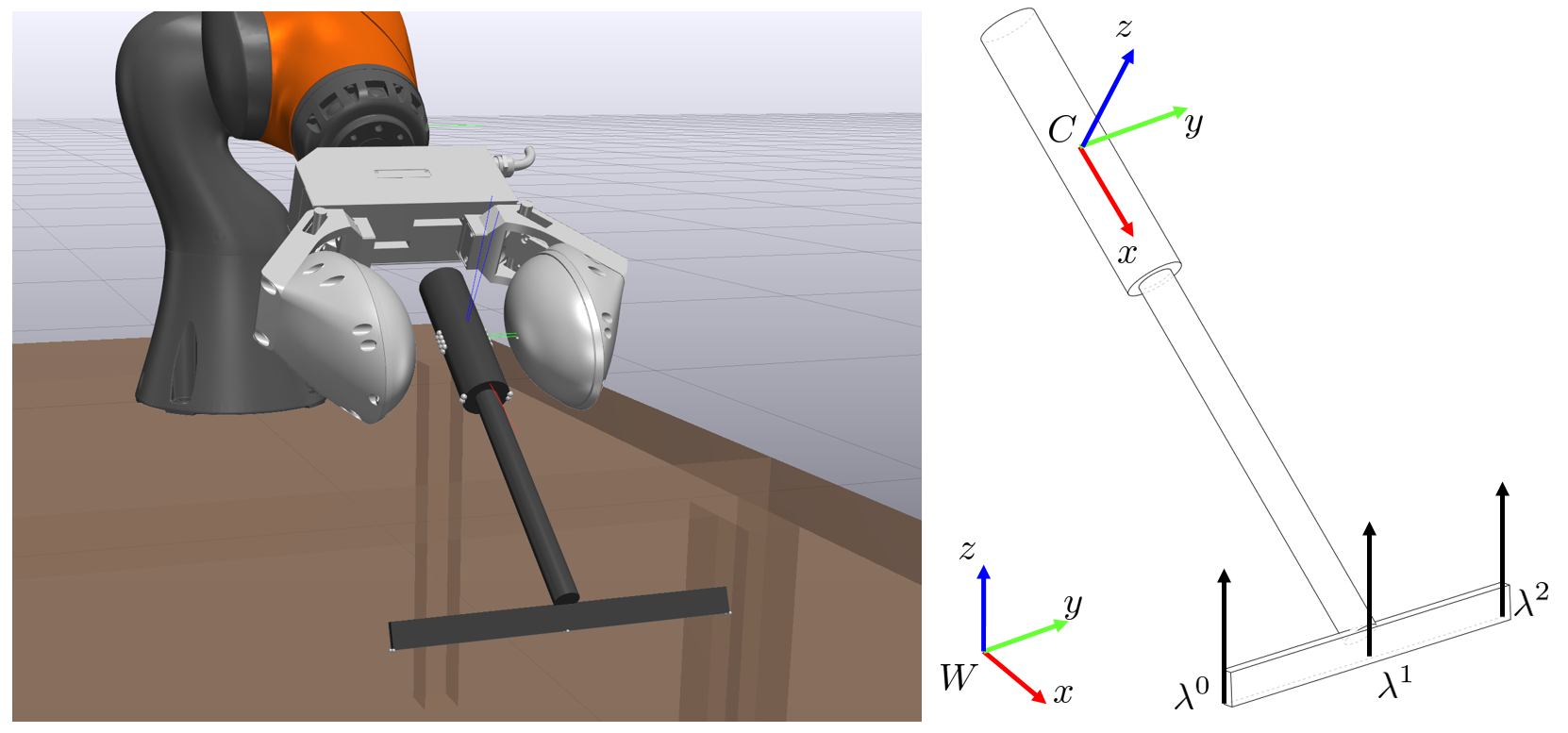}
    \caption{Left: Drake Simulation environment. Right: Frame and contact point definitions for the squeegee tool.}
	\label{fig:squeegee}
\end{figure}

\begin{figure}[thpb]
	\centering\includegraphics[width = 0.5\textwidth]{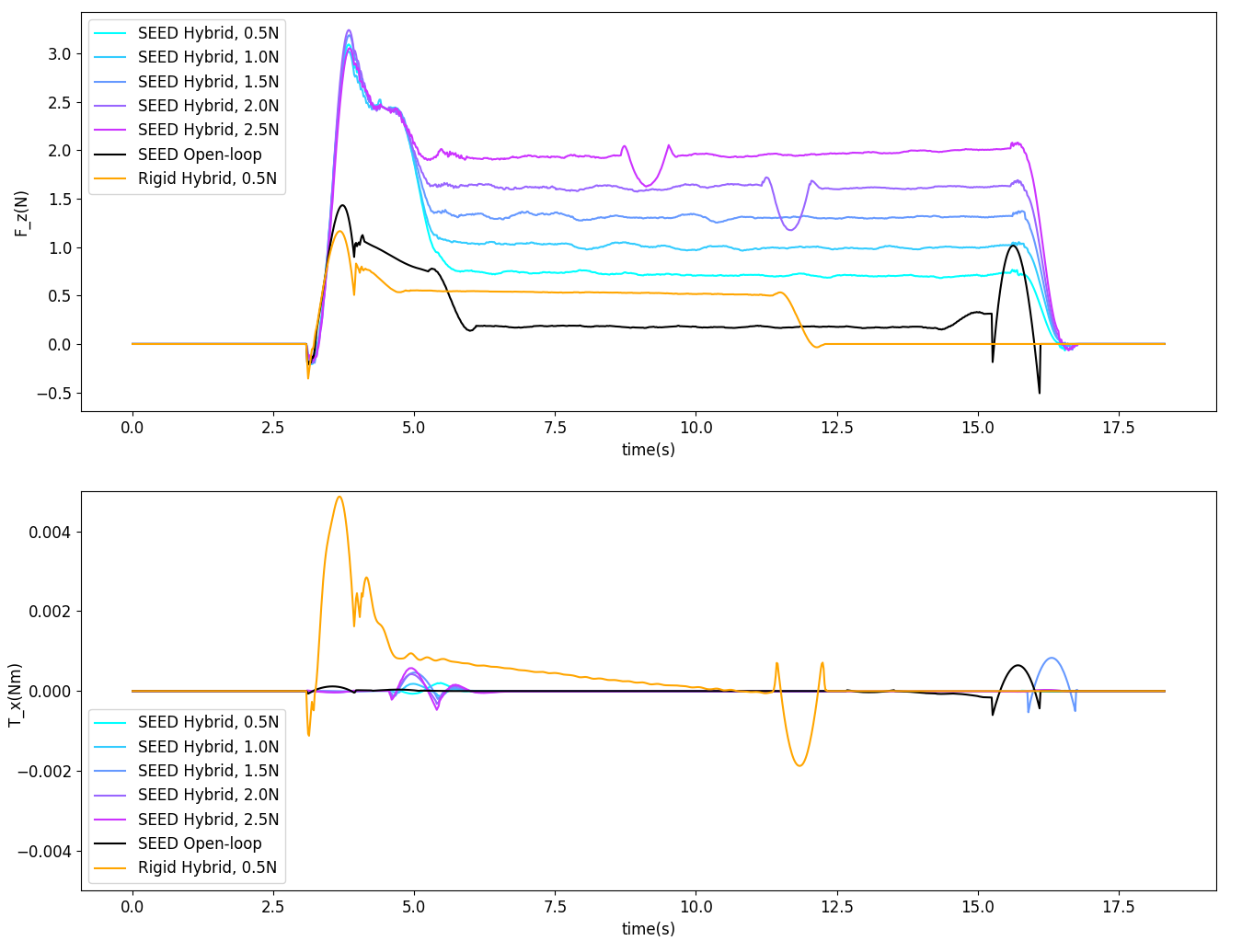}
    \caption{Controller results of tracking $f_z$ and $\tau_x$.}
	\label{fig:contact_results}
	\vskip -0.2 true in
\end{figure}
\subsection{Hardware Methods \& Results}

Though we have verified the behavior of the controller in simulation assuming perfect pose tracking, showing the controller on hardware requires coupling the pose estimator and the controller in all six axes. However, the estimator is unreliable in certain directions such as yaw or $x$-position, which can adversely destabilize closed-loop behavior.

In order to overcome these limitations of the estimator, we propose a simple yet effective strategy: we purposely align the axis that requires force tracking with the axis that our estimator performs well in. As most tasks require at most two or three components of force tracking, we show that it is possible to only estimate well a subset of the relative pose, and still achieve the underlying task.

\subsubsection{Pen Writing Task}
We first test the controller on a pen-writing task, where the robot is commanded to write some characters in the $xy$ plane, while some force is commanded in the $z$ direction. Our setup is illustrated in Fig.\ref{fig:hardware}.D. As the result of Fig.\ref{fig:hardware}.B demonstrates, our controller achieves good tracking performance of specified force, as observed by the differences in marker stroke width and darkness.

We also test our controller by writing letters in Fig.\ref{fig:hardware}.C. While we are successful in tracking the characters, the inherent softness of the hardware sacrifices the bandwidth of the position controller, and frictional interactions between the marker and the paper (e.g. caused by the Painleve effect) can compromise the tracking performance of SEED.

\subsubsection{Squeegee Task}
We test the proposed controller on a real-life task of using a squeegee to clean some liquid on top of a cutting board. The results of our hardware experiment are shown in Fig.\ref{fig:hardware}.A. While the open-loop baseline fails to exert much force on the board, the closed-loop controller is successful in pressing down firmly and clearing all the liquid. 

\begin{figure*}[t]
	\centering\includegraphics[width = 1.0\textwidth]{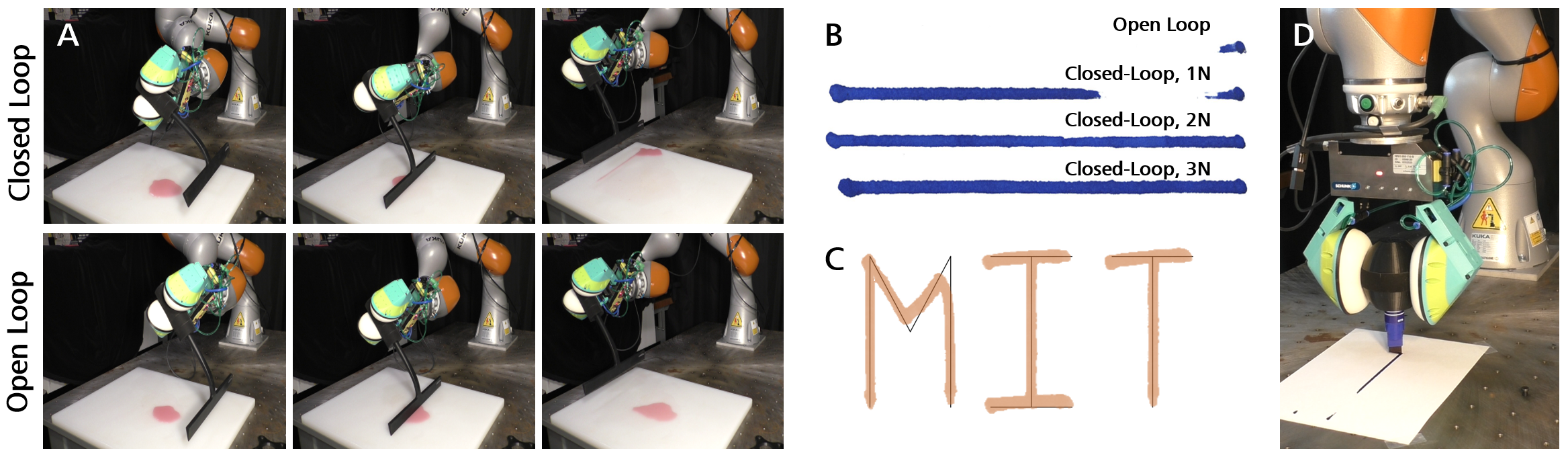}
    \caption{A. Deploying our controller on hardware for using the squeegee (top row), against the open-loop baseline (bottom row). B. Comparison of open-loop stroke against closed-loop strokes with different specified downward forces. C. Performance of character tracking. D. Pen writing setup.}
	\label{fig:hardware}
	\vskip -1.4em
\end{figure*}

\section{Conclusion} 
\label{sec:conclusion}

We have presented SEED, a control and hardware framework that combines the benefits of hardware compliance with visuotactile sensing. Throughout our work, we have demonstrated that we can measure the relative pose of a tool with respect to the gripper using visuotactile sensing. Combined with offline-identified parameters of our spatial stiffness model, we have shown that we can achieve closed-loop spatial force control that can be useful for tool-use. By our demonstration, we aim to alleviate some of the difficulties that rigid contacts and the associated non-smooth behavior bring in the setting of grasping and using tools in the wild.

\renewcommand{\bibfont}{\scriptsize}
\bibliographystyle{plainnat}
\bibliography{references}

\end{document}